\let\llncssubparagraph\subparagraph
\let\subparagraph\paragraph
\let\subparagraph\llncssubparagraph
\crefname{prop}{Proposition}{Propositions}
\Crefname{prop}{Proposition}{Propositions}
\crefname{defn}{Definition}{Definitions}
\Crefname{defn}{Definition}{Definitions}
\crefname{cor}{Corollary}{Corollaries}
\Crefname{cor}{Corollary}{Corollaries}
\crefname{exmpl}{Example}{Examples}
\Crefname{exmpl}{Example}{Examples}
\Crefname{theorem}{Theorem}{Theorem}
\newtheoremstyle{nutstyle}
{3.0pt} 
{3.0pt} 
{} 
{} 
{\bfseries} 
{.} 
{.5em} 
{} 
\theoremstyle{nutstyle}
\newtheoremstyle{nuxstyle}
{3.0pt} 
{3.0pt} 
{} 
{} 
{\itshape} 
{.} 
{.5em} 
{} 
\theoremstyle{nuxstyle}
\newcommand{\todoF}[2]{}
\newcommand{\fml}[1]{{\mathcal{#1}}}
\newcommand{\tn}[1]{\textnormal{#1}}
\newcommand{\mbf}[1]{\ensuremath\mathbf{#1}}
\newcommand{\msf}[1]{\ensuremath\mathsf{#1}}
\newcommand{\mbb}[1]{\ensuremath\mathbb{#1}}
\newcommand{\tbf}[1]{\textbf{#1}}
\newcommand{\nfrac}{\nicefrac}
\newcommand{\eaxp}{$\varepsilon\tn{AXp}$\xspace}
\newcommand{\ecxp}{$\varepsilon\tn{CXp}$\xspace}
\newcommand{\eaex}{$\varepsilon\tn{AEx}$\xspace}
\newcommand{\eaxps}{$\varepsilon\tn{AXps}$\xspace}
\newcommand{\ecxps}{$\varepsilon\tn{CXps}$\xspace}
\newcommand{\beaxps}{$\varepsilon\tbf{AXps}$\xspace}
\newcommand{\becxps}{$\varepsilon\tbf{CXps}$\xspace}
\newcommand{\TRUE}{\textbf{true}\xspace}
\newcommand{\FALSE}{\textbf{false}\xspace}
\newcommand{\SAT}{\ensuremath\mathsf{SAT}}
\newcommand{\outc}{\ensuremath\mathsf{outc}}
\newcommand{\prtaxp}{\ensuremath\mathsf{reportAXp}}
\newcommand{\prtcxp}{\ensuremath\mathsf{reportCXp}}
\newcommand{\findaxpdel}{\ensuremath\msf{FindAXpDel}}
\newcommand{\findcxpdel}{\ensuremath\msf{FindCXpDel}}
\newcommand{\robt}{\ensuremath\msf{FindAEx}}
\definecolor{darkslategray}{rgb}{0.18, 0.31, 0.31} 
\definecolor{platinum}{rgb}{0.9, 0.89, 0.89} 
\definecolor{gray}{rgb}{.4,.4,.4}
\definecolor{midgrey}{rgb}{0.5,0.5,0.5}
\definecolor{middarkgrey}{rgb}{0.35,0.35,0.35}
\definecolor{darkgrey}{rgb}{0.3,0.3,0.3}
\definecolor{darkred}{rgb}{0.7,0.1,0.1}
\definecolor{midblue}{rgb}{0.2,0.2,0.7}
\definecolor{darkblue}{rgb}{0.1,0.1,0.5}
\definecolor{defseagreen}{cmyk}{0.69,0,0.50,0}
\newcommand{\jnote}[1]{\medskip\noindent$\llbracket$\textcolor{darkred}{joao}: \emph{\textcolor{middarkgrey}{#1}}$\rrbracket$\medskip}
\newcommand{\ynote}[1]{\medskip\noindent$\llbracket$\textcolor{darkblue}{yacine}: \emph{\textcolor{middarkgrey}{#1}}$\rrbracket$\medskip}
\newcommand{\jnoteF}[1]{}
\newcommand{\anoteF}[1]{}
\newcounter{tableeqn}[table]
\DeclareMathOperator*{\limply}{\rightarrow}
\newcommand{\waxp}{\ensuremath\mathsf{WAXp}}
\newcommand{\wcxp}{\ensuremath\mathsf{WCXp}}
\newcommand{\axp}{\ensuremath\mathsf{AXp}}
\newcommand{\cxp}{\ensuremath\mathsf{CXp}}
\newcommand{\waxpe}{\ensuremath\varepsilon\mathsf{WAXp}}
\newcommand{\mailtodomain}[1]{\href{mailto:#1@ciencias.ulisboa.pt}{\texttt{\nolinkurl{#1}}}}
\titleformat{\paragraph}[runin]
{\normalfont\bfseries}{}{0pt}{}
\titlespacing{\paragraph}{0pt}{*3.25}{*1.05}
\newcolumntype{L}[1]{>{\raggedright\let\newline\\\arraybackslash\hspace{0pt}}m{#1}}
\newcolumntype{C}[1]{>{\centering\let\newline\\\arraybackslash\hspace{0pt}}m{#1}}
\newcolumntype{R}[1]{>{\raggedleft\let\newline\\\arraybackslash\hspace{0pt}}m{#1}}
\newenvironment{Proof}{\noindent{\em Proof.~}}{\hfill$\Box$\\[-0.15cm]}
\begin{document}

\title{The Pros and Cons of Adversarial Robustness} 
\ifthenelse{\boolean{nonanon}}{
  \titlerunning{The Pros \& Cons of Robustness}
}{
  \titlerunning{}
}
%

\ifthenelse{\boolean{nonanon}}{
  \author{%
    Yacine Izza\inst{1}\orcidID{0000-0002-7774-1945}
    \and\\
    Joao Marques-Silva\inst{2}\orcidID{0000-0002-6632-3086}
  }
  \authorrunning{Y. Izza \& J. Marques-Silva}
  %
  \institute{%
    CREATE, NUS, Singapore
    \email{izza@comp.nus.edu.sg}
    \and
    IRIT, CNRS, Toulouse, France
    \email{joao.marques-silva@irit.fr}
  }
}{
  \author{\tbf{Paper \# NNNN}}
  \authorrunning{~~}
  \institute{}
}

\maketitle

\begin{abstract}
  Robustness is widely regarded as a fundamental problem in the
  analysis of machine learning (ML) models. Most often robustness
  equates with deciding the non-existence of adversarial examples,
  where adversarial examples denote situations where small changes on
  some inputs cause a change in the prediction. 
  The perceived importance of ML model robustness explains the
  continued progress observed
  for most of the last decade.
  Whereas robustness is often assessed locally, i.e. given some target
  point in feature space, robustness can also be defined globally,
  i.e. where any point in feature space can be considered.
  The importance of ML model robustness is illustrated for example by
  the existence of competitions evaluating the progress of robustness
  tools, namely in the case of neural networks (NNs) but also by
  efforts towards robustness certification.
  More recently, robustness tools have also been used for computing
  rigorous explanations of ML models.
  In contrast with the observed successes of robustness, this paper
  uncovers some limitations with existing definitions of robustness,
  both global and local, but also with efforts towards robustness
  certification.
  The paper also investigates uses of adversarial examples besides
  those related with robustness.
\end{abstract}


\keywords{%
  Local \& Global Robustness; Certified Robustness;
  Adversarial Examples; Explanations }

\section{Introduction} \label{sec:intro}

For more than a decade, Machine Learning (ML) has been the subject of
remarkable advances. However, such advances have also been marred by a
number of persistent challenges.
Arguably the best known of these challenges is the \emph{brittleness}
of ML models~\cite{hein-nips17}.
An ML model is brittle if it exhibits \emph{adversarial examples}
(AExs), i.e.\ small changes to the inputs of the ML model can cause
(unexpected and unwanted) changes to the
prediction~\cite{biggio-ecml13,goodfellow-iclr14,goodfellow-iclr15}. Intuitively,
an ML model is robust if it exhibits no AExs.
ML model robustness has been extensively studied over the last
decade~\cite{goodfellow-iclr15,mao-acmcs22}.
The importance of deciding the robustness of ML models motivated an
outpouring of competing approaches, ranging for rather informal 
solutions, to those based on automated reasoners, and even those based
on domain-specific reasoners.
Furthermore, the significance of assessing and asserting robustness is
further highlighted by VNN-COMP (Verification of Neural Networks
Competition~\cite{johnson-sttt23}), a competition of robustness tools
for NNs, held since 2020. In addition, there are also efforts
targeting the \emph{robustness certification} of ML
models~\cite{kolter-icml19a,kolter-icml20a,kolter-iclr20a,kolter-nips21,kolter-iclr23a}. 
Additional efforts towards the verification and validation of complex
ML models include different testing
approaches~\cite{jana-sosp17,jana-icse18,kroening-ase18,jana-cacm19,kroening-icse19,kroening-csr20}.
Furthermore, there have been proposals towards the verification and
validation of systems based on artificial
intelligence~\cite{seshia-cacm22}, covering not only robustness, but
also explainability and fairness.
The uses of systems of artificial intelligence in high-risk and
safety-critical domains had motivated calls for the use of so-called
\emph{interpretable models}~\cite{rudin-naturemi19,rudin-ss22}, with
the purpose of enabling human-decision makers to explain the decisions
taken by such systems.
Unfortunately, such calls have not deterred proposal for the use of
complex systems of AI in high-risk and safety-critical domains.

Robustness is often defined with respect to a concrete input to the ML
model and its associated prediction. In this case, one is referring to
what is called \emph{local robustness}. An alternative view is
\emph{global robustness}, where the goal is to assess robustness for
\emph{any} input of the ML model. Nevertheless, there exist tools that
target both local and global robustness~\cite{barrett-cav17}.
Furthermore, to understand whether an ML is locally/globally robust,
it is also fundamental to outline an adequate experimental setup.

More recently, robustness has been linked with formal approaches to
explainable artificial intelligence
(XAI)~\cite{barrett-corr22,hms-corr23b}. For example, it is the case
that so-called abductive explanations~\cite{msi-aaai22} are such that
no adversarial examples can be identified. More importantly, the
relationship between explanations and adversarial examples allowed for
efficient explainability algorithms to be devised for complex ML
classifiers, including neural
networks~\cite{barrett-corr22,hms-corr23b}. Given the
importance of XAI, such proposals further increase the importance of
robustness tools.

This paper argues that existing definitions of local and global
robustness are problematic. More importantly, the paper shows that the
experimental setup used for assessing robustness is invariably
misguided. In a similar fashion, the paper argues that efforts to
deliver \emph{robustness certification} are necessarily futile.
Motivated by these results, the paper hypothesizes that there is no
simple solution to the basic shortcomings of existing approaches for 
deciding the robustness of ML models.
Nevertheless, the paper also underlines the importance of robustness
tools, not specifically for deciding robustness, but instead as
a key building block for the computation of formal explanations by
iteratively deciding the existence of adversarial examples.

The paper is organized as follows.
\cref{sec:prelim} introduces the notation and definitions used
throughout the paper.
Past work on the robustness of ML models is overviewed
in~\cref{sec:relw}.
\cref{sec:cons} presents a number of shortcomings with existing
approaches towards robustness of ML models.
In contrast, \cref{sec:pros} summarizes positive aspects on past work
on robustness.
Afterwards, \cref{sec:res} analyzes illustrative experiments, most of
which related with~\cref{sec:cons}.
Finally,~\cref{sec:conc} concludes the paper.

\section{Preliminaries} \label{sec:prelim}

\jnoteF{This stuff is temporary. Parts of it may or may not be
  used. This is not important for now.}

\paragraph{Measures of distance.}
We consider the well-known $l_p$ measure of distance, 
\begin{equation} \label{def:lp}
  ||\mbf{x}-\mbf{y}||_p =
  \left(\sum\nolimits_{i=1}^{m}|x_i-y_i|^{p}\right)^{\nfrac{1}{p}}
\end{equation}
Which is referred as the Minkowski distance.
Special cases include the Manhattan distance $l_1$, the Euclidean
distance $l_2$ and the Chebyshev distance $l_{\infty}$, which
is defined by,
\begin{equation}
  \lim_{p\to\infty}||\mbf{x}-\mbf{y}||_p  = \max_{1\le{i}\le{m}}\{|x_i-y_i|\}
\end{equation}
Moreover, $l_0$ denotes the Hamming distance, defined by:
\begin{equation} \label{def:l0}
  ||\mbf{x}-\mbf{y}||_0 = \sum\nolimits_{i=1}^{m}\tn{ITE}(x_i=y_i,0,1)
\end{equation}

\paragraph{Classification problems.}

A classification problem is defined on a set of features
$\fml{F}=\{1,\ldots,m\}$ and a set of classes
$\fml{K}=\{c_1,c_2,\ldots,c_K\}$.
Each feature $i\in\fml{F}$ takes values from a domain $\fml{D}_i$.
Domains can be categorical or ordinal. If ordinal, domains can be
discrete or real-valued.
Throughout the paper, and unless otherwise stated, domains will be
assumed to be real-valued.
%
Feature space is defined by
$\mbb{F}=\fml{D}_1\times{\fml{D}_2}\times\ldots\times{\fml{D}_m}$. 
The notation $\mbf{x}=(x_1,\ldots,x_m)$ denotes an arbitrary point in 
feature space, where each $x_i$ is a variable taking values from
$\fml{D}_i$. Moreover, the notation $\mbf{v}=(v_1,\ldots,v_m)$
represents a specific point in feature space, where each $v_i$ is a
constant representing one concrete value from $\fml{D}_i$.
An \emph{instance} denotes a pair $(\mbf{v}, c)$, where
$\mbf{v}\in\mbb{F}$ and $c\in\fml{K}$.
An ML classifier $\fml{M}$ is characterized by a
non-constant \emph{classification function} $\kappa$ that maps feature
space $\mbb{F}$ into the set of classes $\fml{K}$,
i.e.\ $\kappa:\mbb{F}\to\fml{K}$.
Given the above, we associate with a classifier $\fml{M}$, a tuple 
$(\fml{F},\mbb{F},\fml{K},\kappa)$.
Since we assume that $\kappa$ is non-constant, then the ML classifier
$\fml{M}$ is declared \emph{non-trivial},
i.e.\ $\exists(\mbf{a},\mbf{b}\in\mbb{F}).\left(\kappa(\mbf{a})\neq\kappa(\mbf{b})\right)$. 


When reasoning about systems with formal methods, it is often assumed
that all inputs are possible, or alternatively, that there exist
explicit constraints that disallow some inputs. In contrast,
reasoning methods rooted in machine learning usually assume some input
distribution, which most often needs to be inferred (or approximated)
from training data or be user-specified.
We say that a point $\mbf{x}$ is \emph{viable} if, for reasoning
purposes, $\mbf{x}$ must be accounted for.
As a result, when making statements about ML models, we consider three
possible scenarios on the inputs:
\begin{enumerate}
\item \emph{Unconstrained inputs}, i.e.\ any point $\mbf{x}$ in
  feature space is viable.
\item \emph{Constrained inputs}, i.e.\ any point $\mbf{x}$ in feature
  space is viable iff some set of constraints $\fml{C}$ is satisfied
  by $\mbf{x}$. This is represented by the predicate $\xi(\mbf{x})$.
\item \emph{Distribution-restricted} inputs, i.e.\ any point $\mbf{x}$
  in feature space is viable iff it respects some distribution
  $\fml{D}$ i.e. $\mbf{x}\sim\fml{D}$, which is either user-specified
  or it is inferred from training data.
\end{enumerate}
Throughout the paper, we assume the case of \emph{unconstrained
inputs} for two main reasons. First, rigorous approaches for
robustness make that implicit assumption. Second, assuming
unconstrained inputs simplifies the notation.
Nevertheless, in cases where the distinction matters, the paper also
accounts for the other possible cases on the inputs.

\subsection{Robustness \& Adversarial Examples} \label{ssec:robae}

\paragraph{Local robustness.}
%
Given a classifier $\fml{M}$ with a classification function $\kappa$,
and an instance $(\mbf{v},c)$, the classifier is \emph{locally
robust} (or just robust) for $\mbf{v}$ if,
\begin{equation} \label{eq:locrob}
  \forall(\mbf{x}\in\mbb{F}).\left[||\mbf{x}-\mbf{v}||_p\le\epsilon\right]\limply\left(\kappa(\mbf{x})=\kappa(\mbf{v})\right)
\end{equation}
If the classifier is not robust, then any point $\mbf{x}\in\mbb{F}$
satisfying the condition,
\begin{equation} \label{eq:aex}
  \left[||\mbf{x}-\mbf{v}||_p\le\epsilon\land\left(\kappa(\mbf{x})\not=\kappa(\mbf{v})\right)\right]
\end{equation}
is referred to as an \emph{adversarial example} for distance
$\epsilon$ (\eaex). (Observe that~\eqref{eq:aex} consists of selecting
one of the counterexamples to~\eqref{eq:locrob}.)

The definitions of local robustness in~\eqref{eq:locrob} and of
adversarial example in~\eqref{eq:aex} assume unconstrained inputs. For
completeness, we include the definitions of local robustness for the
other cases regarding assumptions on the inputs.

For constrained inputs we have,
\begin{equation} \label{eq:locrob2}
  \forall(\mbf{x}\in\mbb{F}).\left[\xi(\mbf{x})\land\left(||\mbf{x}-\mbf{v}||_p\le\epsilon\right)\right]\limply\left(\kappa(\mbf{x})=\kappa(\mbf{v})\right)
\end{equation}
For distribution-restricted inputs we have,
\begin{equation} \label{eq:locrob3}
  \reflectbox{$S$}_{\varsigma}(\mbf{x}\sim\fml{D}).\left[||\mbf{x}-\mbf{v}||_p\le\epsilon\right]\limply\left(\kappa(\mbf{x})=\kappa(\mbf{v})\right)
\end{equation}
where $\reflectbox{$S$}_{\varsigma}$ captures the sampling according
to some distribution and target confidence ($\varsigma$), and where 
$\sim$ is interpreted as a predicate, with two arguments $\mbf{x}$ and
$\fml{D}$, that holds iff $\mbf{x}$ respects the distribution
$\fml{D}$. 

There exist a multitude of proposed robustness tools dedicated to
local robustness, many of which are regularly assessed in the
VNN-COMP~\cite{johnson-sttt23}.
\cref{sec:relw} briefly overviews existing work on robustness.

One additional observation is that tools that exploit automated
reasoners assume unconstrained inputs, and so the definition of local
robustness considered is~\eqref{eq:locrob}.
If information about the context in which the ML models is to be
deployed, then $\xi(\mbf{x})$ may be available, and so it is to be
expected that~\eqref{eq:locrob2} would be used instead.
Finally, incomplete methods, often based on the sampling of feature
space, assume the somewhat different definition~\eqref{eq:locrob3},
which can offer probabilistic guarantees, but not absolute
guarantees.

\jnoteF{ToDo}

\paragraph{Certified robustness.}
Earlier research have also proposed \emph{certified robustness}, which
has been defined as follows: 
\begin{definition}[From~\cite{kolter-icml19a}] \label{def:certrob}
  \emph{``A classifier is said to be certifiably robust if for any
  input  $\mbf{x}$, one can easily obtain a guarantee that the
  classifier's prediction is constant within some set around
  $\mbf{x}$, often an $l_2$ or $l_{\infty}$ ball''.}
\end{definition}
%
%
(We underscore that~\cref{def:certrob} is taken verbatim
from~\cite{kolter-icml19a}.) We will refer to this definition
throughout the paper.

\paragraph{Global robustness.}
Given the definition of local robustness, a possible definition of
\emph{global robustness} is,
\begin{equation} \label{eq:globrob}
  \forall(\mbf{v},\mbf{x}\in\mbb{F}).\left(||\mbf{x}-\mbf{v}||_p\le\epsilon\right)\limply\left(\kappa(\mbf{x})=\kappa(\mbf{v})\right) 
\end{equation}
(Observe that~\eqref{eq:globrob} is just a formalization
of~\cref{def:certrob}, by allowing the norm $l_p$ to be any.)
Similar definitions have been studied in the literature. For example,
Reluplex~\cite{barrett-cav17} defines global robustness by allowing
small differences in predictions. This alternative definition raises
concerns in classifications problems, because predicted values may be
small, but the predicted classes may be different.
Moreover, there are other variants of this definition, which the paper
also studies. However, by default we will consider this apparently
sensible definition throughout the paper.


As shown in~\cref{sec:relw}, other definitions of global robustness
can be related with the one proposed above.
\cref{sec:relw} also briefly overviews approaches for deciding global
robustness, local robustness and certified robustness.

\subsection{Logic-Based Explainability} \label{ssec:xps}

\jnoteF{This is work in progress!}

%
As mentioned in~\cref{sec:intro}, the concepts of adversarial examples
and explainability are tightly related. As a result, we include a
brief introduction to logic-based explainability. More detailed
accounts are available~\cite{msi-aaai22,darwiche-lics23}. 

An explanation problem $\fml{E}$ is a tuple $(\fml{M},(\mbf{v},c))$,
where $\fml{M}$ is a classifier, and $(\mbf{v},c)$ is an instance.
When describing concepts in explainability, it is to be understood an
underlying explanation problem $\fml{E}$.
Prime implicant (PI) explanations~\cite{darwiche-ijcai18} denote a
minimal set of literals (relating a feature value $x_i$ and a constant
$v_i$ from its domain $\fml{D}_i$) that are sufficient for the
prediction. PI-explanations can be formulated as a problem of
logic-based abduction, and so are also referred to as \emph{abductive
explanations} (AXp)~\cite{inms-aaai19}. 
Formally, given $\mbf{v}=(v_1,\ldots,v_m)\in\mbb{F}$ with
$\kappa(\mbf{v})=c$, an AXp is any minimal subset
$\fml{X}\subseteq\fml{F}$ such that,
\begin{equation} \label{eq:axp}
  \forall(\mbf{x}\in\mbb{F}).
  \left[
    \bigwedge\nolimits_{i\in{\fml{X}}}(x_i=v_i)
    \right]
  \limply(\kappa(\mbf{x})=c)
\end{equation}
%
We associate a predicate $\waxp$ with~\eqref{eq:axp}, such that any
set $\fml{X}\subseteq\fml{F}$ for which $\waxp(\fml{X};\fml{E})$~\footnote{%
Parameterizations are shown after ';', but will be omitted when clear
from context.}
holds is referred to as a \emph{weak} AXp. Thus, every AXp is a weak
AXp that is also subset-minimal.
AXps can be viewed as answering a 'Why?' question, i.e.\ why is some
prediction made given some point in feature space. A different view of
explanations is a contrastive explanation~\cite{miller-aij19}, which
answers a 'Why Not?' question, i.e.\ which features can be changed to
change the prediction. A formal definition of contrastive explanation
is proposed in recent work~\cite{inams-aiia20}.
Given $\mbf{v}=(v_1,\ldots,v_m)\in\mbb{F}$ with $\kappa(\mbf{v})=c$, a
CXp is any minimal subset $\fml{Y}\subseteq\fml{F}$ such that,
\begin{equation} \label{eq:cxp}
  \exists(\mbf{x}\in\mbb{F}).\bigwedge\nolimits_{i\in\fml{F}\setminus\fml{Y}}(x_i=v_i)\land(\kappa(\mbf{x})\not=c) 
\end{equation}
Moreover, we associate a predicate $\wcxp$ with~\eqref{eq:cxp}, such
that any set $\fml{Y}\subseteq\fml{F}$ for which
$\wcxp(\fml{Y};\fml{E})$ holds is referred to as a \emph{weak}
CXp. Thus, every CXp is a weak CXp that is also subset-minimal.
In addition to the predicates $\waxp$ and $\wcxp$, we also introduce
$\axp$ and $\cxp$, which hold true if a given set of features is,
respectively, an AXp or a CXp.

The relationship between explanations and adversarial examples can be
further clarified~\cite{hms-corr23b}, as follows.

\begin{proposition}
  Given an explanation problem $\fml{E}=(\fml{M},(\mbf{v},c))$,
  $\fml{Y}\subseteq\fml{F}$ is a WCXp iff $\fml{M}$ has an \eaex, with
  $l_0$ distance $\epsilon=|\fml{Y}|$.
\end{proposition}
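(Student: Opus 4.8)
The plan is to prove both directions of the equivalence by moving a single witness point $\mbf{x}\in\mbb{F}$ between the two defining conditions, using the elementary fact that the $l_0$ (Hamming) distance of~\eqref{def:l0} counts exactly the coordinates on which $\mbf{x}$ and $\mbf{v}$ disagree, i.e.\ $||\mbf{x}-\mbf{v}||_0 = |\{i\in\fml{F} : x_i\neq v_i\}|$. Throughout I use that, in an explanation problem $\fml{E}$, one has $\kappa(\mbf{v})=c$, so ``$\kappa(\mbf{x})\neq c$'' and ``$\kappa(\mbf{x})\neq\kappa(\mbf{v})$'' are interchangeable; I also specialize the general-$l_p$ definition~\eqref{eq:aex} of an \eaex to $p=0$.

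For the direction ``$\fml{Y}$ is a WCXp $\Rightarrow$ $\fml{M}$ has an \eaex'': unfolding $\wcxp(\fml{Y};\fml{E})$ via~\eqref{eq:cxp} yields a point $\mbf{x}$ with $x_i = v_i$ for every $i\in\fml{F}\setminus\fml{Y}$ and $\kappa(\mbf{x})\neq c$. Then $\{i : x_i\neq v_i\}\subseteq\fml{Y}$, hence $||\mbf{x}-\mbf{v}||_0\le|\fml{Y}|$; taking $\epsilon=|\fml{Y}|$, the point $\mbf{x}$ satisfies~\eqref{eq:aex}, so it is an \eaex at $l_0$ distance $\epsilon=|\fml{Y}|$.

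For the converse, suppose $\mbf{x}$ is an \eaex at $l_0$ distance $\epsilon$, i.e.\ $||\mbf{x}-\mbf{v}||_0\le\epsilon$ and $\kappa(\mbf{x})\neq\kappa(\mbf{v})$. Put $\fml{Y}_0:=\{i\in\fml{F} : x_i\neq v_i\}$, so that $|\fml{Y}_0|=||\mbf{x}-\mbf{v}||_0\le\epsilon$ and $x_i=v_i$ for all $i\in\fml{F}\setminus\fml{Y}_0$; hence the same $\mbf{x}$ witnesses~\eqref{eq:cxp} for $\fml{Y}_0$, and $\fml{Y}_0$ is a weak CXp. To land on a weak CXp of the prescribed cardinality, I would invoke monotonicity of weak CXps: if $\fml{Y}_0\subseteq\fml{Y}$ then $\wcxp(\fml{Y};\fml{E})$ still holds, since relaxing the conjunction in~\eqref{eq:cxp} only enlarges the set of admissible $\mbf{x}$; this lets me extend $\fml{Y}_0$ to any $\fml{Y}$ with $\fml{Y}_0\subseteq\fml{Y}\subseteq\fml{F}$ of size $|\fml{Y}|=\min(\epsilon,m)$, matching $\epsilon=|\fml{Y}|$ whenever $\epsilon\le m$.

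The argument is essentially bookkeeping: once $\fml{Y}$ is identified with the support of the perturbation $\mbf{x}-\mbf{v}$, both implications are immediate. The only places needing care — and the closest thing to an obstacle — are (i) the mismatch between the ``$\le\epsilon$'' in the definition~\eqref{eq:aex} of an \eaex and the exact equality $\epsilon=|\fml{Y}|$ in the statement, which is what forces the monotone-padding step in the converse direction, and (ii) keeping the substitution $c=\kappa(\mbf{v})$ explicit so that the class-change conditions in~\eqref{eq:cxp} and~\eqref{eq:aex} line up.
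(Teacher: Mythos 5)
Your proof is correct and takes essentially the same route as the paper's: both directions translate a single witness point between~\eqref{eq:cxp} and~\eqref{eq:aex} by identifying $\fml{Y}$ with (a superset of) the coordinates where $\mbf{x}$ and $\mbf{v}$ differ. In fact your converse is slightly more careful than the paper's, which tacitly picks ``the $\epsilon$ features that change'' as if exactly $\epsilon$ of them do, whereas you correctly handle the $||\mbf{x}-\mbf{v}||_0\le\epsilon$ case by padding $\fml{Y}_0$ up to cardinality $\epsilon$ using the monotonicity of $\wcxp$.
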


\begin{proof}
  The proof is split into two cases:
  \begin{itemize}[nosep]
  \item[$\Rightarrow$] If there exists one CXp $\fml{Y}$, this means
    that, if we allow the features in $\fml{Y}$ to change value, then
    the prediction changes values. Thus, if we allow
    $\epsilon=|\fml{Y}|$ features to change value, we are guaranteed
    to find an adversarial example with $l_0$ distance $\epsilon$.
  \item[$\Leftarrow$] If there exists an adversarial example with
    $l_0$ distance $\epsilon$, then we construct $\fml{Y}$, with
    $|\fml{Y}|=\epsilon$ by picking the $\epsilon$ features that
    change their value with  respect to $\mbf{v}$ in the adversarial
    example. Thus, $\fml{Y}$ is a WCXp. \qedhere
  \end{itemize}
\end{proof}

Building on the results of R.~Reiter in model-based
diagnosis~\cite{reiter-aij87},~\cite{inams-aiia20} proves a minimal
hitting set (MHS) duality relation between AXps and CXps, i.e.\ AXps
are MHSes of CXps and vice-versa.
Furthermore, it can be shown that both predicates $\waxp$ and $\wcxp$
are monotone. An important consequence of this observation is that
one can then use efficient oracle-based algorithms for finding AXps
and/or CXps~\cite{msjm-aij17}.
Thus, as long as one can devise logic encodings for an ML classifier
(and this is possible for most ML classifiers) and access to a
suitable reasoner, then~\eqref{eq:axp} and~\eqref{eq:cxp} offer a
solution for computing one AXp/CXp. 

Recent years witnessed a rapid development of logic-based
explainability, with practically efficient solutions devised for a
growing number of ML models. Overviews of these results are
available~\cite{msi-aaai22,darwiche-lics23}. 

\paragraph{Distance-restricted explanations.} \label{par:drxps}
With the purpose of relating explanations with robustness, recent
work~\cite{hms-corr23b} introduced the concept of distance-restricted
explanation. 

Given a classifier $\fml{M}$, and instance $(\mbf{v},c)$ and
$\epsilon>0$, a distance-restricted AXp (\eaxp) is a subset-minimal
set of features $\fml{X}\subseteq\fml{F}$ such that,
\begin{equation} \label{eq:dr:waxp}
  \forall(\mbf{x}\in\mbb{F}).
   \left[\bigwedge\nolimits_{i\in\fml{X}}(x_i=v_i)\land
    ||\mbf{x}-\mbf{v}||_p\le\epsilon\right]
  \limply(\kappa(\mbf{x})=c)
\end{equation}
We define distance-restricted CXps (\ecxps)  accordingly. A \ecxp is a
subset-minimal set of features $\fml{Y}\subseteq\fml{F}$, such that,
\begin{equation} \label{eq:dr:wcxp}
  \exists(\mbf{x}\in\mbb{F}).
   \left[\bigwedge\nolimits_{i\in\fml{F}\setminus\fml{Y}}(x_i=v_i)\land
    ||\mbf{x}-\mbf{v}||_p\le\epsilon\land
    (\kappa(\mbf{x})\not=c)\right]
\end{equation}
We extend predicates for (weak) \eaxps/\ecxps to account for the
allowed value of distance as an argument, and parameterized by the
measure of distance, e.g.\ $\waxpe(\fml{X},\epsilon;\fml{E})$.

MHS duality between distance-restricted AXps and CXps has been
proved, which enables the development of algorithms for navigating the
space of \eaxps and \ecxps~\cite{hms-corr23b}.

Finally, there is a simple relationship between (distance-unrestricted
or plain) AXps/CXps and \eaxps/\ecxps. By picking $l_0$, i.e.\ Hamming
distance, and letting $\epsilon=m$, i.e.\ the number of features, then
\eaxps/\ecxps represent \emph{exactly} the (plain) AXps/CXps. Observe
that, by setting $\epsilon=m$, we allow any subset of the features to
be included/excluded from AXps/CXps. Hence, we will be computing
distance-unrestricted AXps/CXps using algorithms developed for
\eaxps/\ecxps.

\subsection{Running Examples} \label{ssec:runex}
%
To motivate the claims in the paper, the following very simple
classifiers are used as the running examples throughout the paper.

\paragraph{First classifier.}
A simple linear classifier is presented in~\cref{ex:runex01}.

\begin{example} \label{ex:runex01}
  A first classifier $\fml{M}_1$ is defined on a single feature
  $\fml{F}_1=\{1\}$,  with $\mbb{D}_{11}=\mbb{R}$. The set of classes
  is $\fml{K}_1=\{0,1\}$, and the training data is given by: 
  $\{ (0.0, 0), \allowbreak (0.3, 0), \allowbreak (0.4, 0), \allowbreak (0.7, 1), \allowbreak (1.0, 1) \}$.
  Furthermore, we use an off-the-shelf ML toolkit,
  e.g.\ scikit-learn~\cite{scikitlearn}, to learn the classifier's
  function $\kappa_1$ as a \emph{linear classifier}
  $\kappa_1:\mbb{D}_{11}\to\fml{K}_1$.
  Accordingly, the model learned by scikit-learn is,
  \[
  \kappa(x_1) = \tn{ITE}(0.93198992\times{x_1}-0.64735516\ge0,1,0)
  \]
  As can be observed, the accuracy of the learned classifier over
  training data is 100\%. 
  Moreover, the question we seek to answer is: \emph{is the classifier
  (locally or globally) robust?}
\end{example}

\paragraph{Second classifier.}
A second classifier, defined in terms of the first classifier is
presented in~\cref{ex:runex01}. 

\begin{example} \label{ex:runex02}
  A second classifier $\fml{M}_2$ is obtained from the first one above
  (see~\cref{ex:runex01}), but defined as follows:
  \[
  \kappa_2(x_1,x_2)=\left\{\begin{array}{lcl}
  \kappa_1(x_1) & \quad & \tn{if $x_1\le1$} \\[5pt]
  \tn{ITE}(x_1>x_2,1,0) & \quad & \tn{otherwise} \\
  \end{array}
  \right.
  \]
  In this case, $\fml{F}_2=\{1,2\}$,
  $\mbb{D}_{21}=\mbb{D}_{22}=\mbb{R}$, $\mbb{F}=\mbb{R}\times\mbb{R}$,
  and  $\fml{K}_2 = \{0,1\}$.
\end{example}

\paragraph{Motivating examples.}
The following examples illustrate some of the definitions in this
section, and motivate some of the results presented in later
sections.

\cref{ex:aex} exemplifies the definition of adversarial examples.

\begin{example} \label{ex:aex}
  For the classifier of~\cref{ex:runex01}, and for the instance
  $(0.7,1)$, from training data (and assuming 100\% accuracy on
  training data), it is apparent that an adversarial example exists by
  setting $\epsilon=0.3$. By manual inspection of the learned model,
  we conclude that we obtain an adversarial example with a smaller
  $\epsilon$, e.g. $\epsilon=0.1$ suffices.
\end{example}

\cref{ex:axp} illustrates the (standard) definitions of AXps and
CXps.

\begin{example} \label{ex:axp}
  Regarding the classifier of~\cref{ex:runex02}, and for the instance
  $((0,1), 0)$, it is apparent that one AXp is $\{1\}$, i.e.\ if
  feature 1 is fixed, then the prediction is guaranteed to be 0.
  \\
  Similarly, one CXp is $\{1\}$, i.e.\ it suffices to allow feature 1
  to change value to be able to change the prediction.
\end{example}

Moreover, \cref{ex:eaxp2,ex:eaxp1} illustrate the definitions of
\eaxps and \ecxps.

\begin{example} \label{ex:eaxp2}
  For the classifier of~\cref{ex:runex02}, we can also investigate
  \eaxps/\ecxps for the instance $((0,1),0)$.
  We consider two example values for $\epsilon$.\\
  For $\epsilon=0.5$, it is clear that the prediction of $\kappa_2$
  remains 0 for any $\mbf{x}$ such that $||\mbf{x}-(0,1)||_0\le0.5$.
  Hence, there are no adversarial examples, and so both the
  \eaxp and the \ecxp are $\emptyset$.\\
  For $\epsilon=0.7$, it is clear that the prediction of $\kappa_2$
  can change to 1 by picking $x_1=0.7$, for example.
  Hence, there exist no adversarial examples obtained by changing the
  value of $x_1$. In this case, both the \eaxp and the \ecxp are
  $\{1\}$. Given~\cref{ex:axp}, it is clear that the same
  \eaxps/\ecxps will be obtained for $\epsilon\ge0.7$.\\
  It should be clear that we analyzed distance-restricted
  explanations for only two possible values of $\epsilon$.
  Another question of interest, which we study later in the paper, is
  to find the value of $\epsilon$ for which the set of \eaxps/\ecxps
  changes.
\end{example}

\begin{example} \label{ex:eaxp1}
  For the classifier of~\cref{ex:runex01}, explanations are apparently
  less interesting, because we have a single feature.
  Nevertheless, the arguments of~\cref{ex:eaxp2} also hold for
  $\kappa_1$.\\
  For $\epsilon=0.5$, both the \eaxp and the \ecxp are $\emptyset$.\\
  For $\epsilon=0.7$, both the \eaxp and the \ecxp are $\{1\}$. In
  addition, it is clear that the same \eaxps/\ecxps will be obtained
  for $\epsilon\ge0.7$.
\end{example}

\section{Related Work} \label{sec:relw}

%
The realization that ML models are most often
brittle~\cite{biggio-ecml13,goodfellow-iclr14,goodfellow-iclr15},
i.e.\ that ML models can exhibit adversarial examples, motivated a
massive body of research over the last decade on deciding the
robustness of ML models.
The goal of this section is to briefly overview works that are of
special interest to the paper's topics, especially ML model robustness
and the identification of adversarial examples.
Moreover, a growing number of
surveys~\cite{berker-corr19,li-ieee-tnnls20,wang-jbd20,chakraborty-tit21,rokach-acmcs22,liang-electronics22,mao-acmcs22,yu-acmcs23,guan-acmcs23}
illustrate the importance of robustness and adversarial examples for
the practical deployment of ML models.

\paragraph{Local robustness.}
%
%
The definition of local robustness proposed in most of past work
matches the one used in this paper (see~\eqref{eq:locrob3}).
Examples of tools that decide local robustness are those evaluated in
VNN-COMP~\cite{johnson-sttt23}.

\jnoteF{Cite papers on LOCAL ROBUSTNESS.}

\paragraph{Global robustness.}
%
%
Past work considers global robustness as proposed
in~\cref{eq:globrob}, which formalizes~\cref{def:certrob}. This is
the case with~\cite{Seshia-atva18,Wagner-ccs21}, but
also~\cite{kolter-icml19a,kolter-icml20a,kolter-iclr20a,kolter-nips21,kolter-iclr23a}.
Some works propose a slightly modified definition of global
robustness~\cite{barrett-cav17}:
\begin{equation} \label{eq:globrobv}
  \forall(\mbf{v},\mbf{x}\in\mbb{F}).\left(||\mbf{x}-\mbf{v}||_p\le\epsilon\right)\limply\left(|\kappa(\mbf{x})-\kappa(\mbf{v})|\le\delta\right) 
\end{equation}
where $\delta>0$.
When compared with~\eqref{eq:globrob}, the modified definition targets
neural networks, especially when these compute real-valued outputs.
A number of works adopt this definition of global
robustness~\cite{WangHZ22,WangHZ-corr22,fu2022reglo},
but~\cite{fu2022reglo} imposes no constraint on $\mbf{x}$ and
$\mbf{v}$.
It should be noted that this definition is not without problems. For
ML classifiers, e.g.\ image classification, conditions on the values
of the outputs are uninteresting, and so a definition similar
to~\eqref{eq:globrob} must be considered.

A different approach is adopted in~\cite{Kwiatkowska-ijcai19} where
global robustness is defined with respect to a \emph{finite set} of
points in feature space, and not \emph{all} points in feature space.
Yet another take on global robustness is to reject inputs that
are classified as AEx~\cite{LeinoWF21,Kolter-aistats23}. Thus a model
can return class \emph{abstain} on a given input $\mbf{x}$.  
Finally, another line of research is represented by DeepSafe, which
finds safe regions where robustness is guaranteed~\cite{barret-atva18,Dimitrov0GV22}.

\paragraph{Robustness certification.}
%
%
Work on certifying robustness can bee traced to 
\cite{kolter-icml19a,proven-icml19,GehrMDTCV18,SinghGMPV18}   and 
more recently~\cite{kolter-icml20a,kolter-iclr20a,kolter-nips21,kolter-iclr23a} that 
leverage on local robustness property to provide certification and/or quantify  
robustness of models against adversarial examples. 
As illustration, empirical evaluation reported in~\cite{kolter-iclr23a} considers  
a collection of  100,000 and 10,000 samples, respectively,  drawn from CIFAR10 and 
ImageNet datasets, that serve to  certify {\it robustness accuracy}, i.e. 
percentage of samples failed/succeeded  in the local robustness test. 
Besides, works reported in~\cite{liu2020certified,WangHZ-corr22,WangHZ22} adopt global 
robustness property to certify whether or not the analyzed model is robust.
Furthermore, some works \cite{fu2022reglo,WangHZ22} use local and global 
robustness techniques to measure lower and upper bounds for robustness. 
Another recent work \cite{Dimitrov0GV22} computes regions  
with robustness certification on all possible points in these regions.

\section{The Cons of Robustness}
\label{sec:cons}

This section proves a number of negative results regarding global and
local robustness, but also regarding robustness certification. More
importantly, those negative results impact the conclusions drawn in
earlier work on robustness.
Nevertheless, this section also discusses ways to cope with these
negative results.

\subsection{Basic Negative Results -- Real-Valued Domains}
\label{ssec:cons:rd:neg}

\paragraph{There is no global robustness.}
A straightforward observation is that, given the proposed definition
of global robustness (see~\eqref{eq:globrob}), then there exist no
non-trivial globally robust classifiers.

\begin{proposition} \label{prop:noglobrob}
  Any non-trivial classifier defined on real-valued features is not
  globally robust, independently of the value of $\epsilon$ chosen.
  (We assume that the measure of distance considered is $l_p$, with
  $p\ge1$.)
\end{proposition}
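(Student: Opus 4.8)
The plan is to directly negate~\eqref{eq:globrob}: for an arbitrary $\epsilon>0$ I will exhibit a pair of points $\mbf{v},\mbf{x}\in\mbb{F}$ with $||\mbf{x}-\mbf{v}||_p\le\epsilon$ yet $\kappa(\mbf{x})\neq\kappa(\mbf{v})$. Since $\fml{M}$ is non-trivial, there exist $\mbf{a},\mbf{b}\in\mbb{F}$ with $\kappa(\mbf{a})\neq\kappa(\mbf{b})$ (and hence $\mbf{a}\neq\mbf{b}$). Because all domains are real-valued and inputs are unconstrained, $\mbb{F}=\mbb{R}^m$ is convex, so the line segment joining $\mbf{a}$ and $\mbf{b}$ lies entirely inside $\mbb{F}$. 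The idea is to walk along this segment in steps short enough that every step has $l_p$-length at most $\epsilon$, and to observe that the class is forced to flip on at least one step.

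Concretely, I would put $d=||\mbf{b}-\mbf{a}||_p>0$, fix an integer $N\ge d/\epsilon$, and define the sample points $\mbf{p}_k=\mbf{a}+\tfrac{k}{N}(\mbf{b}-\mbf{a})$ for $k=0,1,\ldots,N$, so that $\mbf{p}_0=\mbf{a}$ and $\mbf{p}_N=\mbf{b}$. By homogeneity of the $l_p$ norm, $||\mbf{p}_{k+1}-\mbf{p}_k||_p=\tfrac{1}{N}\,||\mbf{b}-\mbf{a}||_p=\tfrac{d}{N}\le\epsilon$ for every $k$. Since $\kappa(\mbf{p}_0)=\kappa(\mbf{a})\neq\kappa(\mbf{b})=\kappa(\mbf{p}_N)$, the finite sequence $\kappa(\mbf{p}_0),\ldots,\kappa(\mbf{p}_N)$ cannot be constant, so by a trivial pigeonhole argument there is an index $k\in\{0,\ldots,N-1\}$ with $\kappa(\mbf{p}_k)\neq\kappa(\mbf{p}_{k+1})$. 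Taking $\mbf{v}=\mbf{p}_k$ and $\mbf{x}=\mbf{p}_{k+1}$ yields the desired counterexample to~\eqref{eq:globrob}, and since $\epsilon>0$ and $p\ge1$ were arbitrary, this proves the claim.

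There is no genuinely hard step here; the argument is elementary, and the only points worth a line of care are: (i) that the intermediate points $\mbf{p}_k$ are admissible inputs, which holds precisely because we assume unconstrained real-valued inputs, so $\mbb{F}=\mbb{R}^m$ and no viability predicate can exclude them (for constrained or distribution-restricted inputs the statement could fail); and (ii) that each step has length exactly $d/N$, which is just scaling invariance of $l_p$. One could alternatively phrase the pigeonhole step topologically — the segment is connected and the classes partition it — but the explicit finite discretization above is cleaner and needs no assumption on $\kappa$ beyond it being a total function $\mbb{F}\to\fml{K}$.
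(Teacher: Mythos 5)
Your proof is correct: for each $\epsilon>0$ you negate~\eqref{eq:globrob} directly by subdividing the segment joining two differently-classified points $\mbf{a},\mbf{b}$ into steps of $l_p$-length at most $\epsilon$ and applying pigeonhole to find a consecutive pair with different predictions; the convexity and homogeneity remarks cover the only delicate points. The paper argues along the same segment but in a different way: it claims the existence of a single \emph{transition} point $\mbf{z}$ between $\mbf{v}_a$ and $\mbf{v}_b$ such that points of both classes lie arbitrarily close to $\mbf{z}$, so that local robustness fails \emph{at that one point for every} $\epsilon$, and global robustness fails a fortiori. The trade-off is this: your discretization argument is more elementary and arguably more rigorous for the proposition as stated (it needs no existence claim about a boundary point, only that $\kappa$ is a total function), but the witness pair it produces depends on $\epsilon$. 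The paper's construction buys an $\epsilon$-independent witness, which is exactly what is reused in \cref{prop:nolocrob,prop:norobcert} (a single point that is non-robust for all $\epsilon$, hence cannot be certified); your argument as written only yields the weaker ``for every $\epsilon$ there is some non-robust point'' and would need a supremum/boundary-point refinement (e.g.\ take $\mbf{z}$ at $\delta^\ast=\sup\{\delta:\kappa(\mbf{a}+\delta(\mbf{b}-\mbf{a}))=\kappa(\mbf{a})\}$) to recover that stronger form.
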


\begin{proof}
  We consider a classifier with two classes $\{a,b\}$, with
  $a\not=b$. Let $\mbf{v}_a,\mbf{v}_b\in\mbb{F}$ be such that
  $\kappa(\mbf{v}_a)=a$ and $\kappa(\mbf{v}_b)=b$.
  Moreover, let $\mbf{z}\in\mbb{F}$ be such that
  $\mbf{z}=\mbf{v}_a+\delta\times(\mbf{v}_b-\mbf{v}_a)$, with 
  $0\le\delta\le1$. 
  We pick $\mbf{z}$ such that with $0\le\eta\le\delta$ and
  $0\le\delta+\eta\le1$,
  we can define 
  $\mbf{x}=\mbf{v}_a+(\delta-\eta)\times(\mbf{v}_b-\mbf{v}_a)$
  and
  $\mbf{y}=\mbf{v}_a+(\delta+\eta)\times(\mbf{v}_b-\mbf{v}_a)$,
  such that $\kappa(\mbf{x})=a$ and $\kappa(\mbf{y})=b$.
  We claim that such a point $\mbf{z}$ must exist; otherwise, the
  classifier would be constant between $\mbf{v}_a$ and $\mbf{v}_b$,
  and that is impossible because we impose $a\not=b$.
  Thus for $\epsilon>0$, it is the case that the classifier is not
  locally robust on point $\mbf{z}$, since the prediction
  changes within a $l_p$ ball of $\mbf{z}$. Since the classifier is
  not locally robust on $\mbf{z}$, then it is not globally robust.
\end{proof}

An observation that mimics~\cref{prop:noglobrob} is made in
recent work~\cite{LeinoWF21}. However, the consequences of such
observation were not investigated further. Instead, a proposed
solution to the problem of global robustness was to change the
training of the classifier to return an indication of inputs that
cannot be guaranteed to be robust. Hence, the solution proposed is to
move from \emph{a posteriori} deciding robustness to training for
robustness.

\begin{example}
  With respect to~\cref{ex:runex01}, recall that the model learned by
  scikit-learn is,
  \[
  \kappa(x_1) = \tn{ITE}(0.93198992\times{x_1}-0.64735516\ge0,1,0)
  \]
  Clearly, the value of $x_1$ for which the predicted class
  transitions from 0 to 1 is 0.69459459.
  Hence, exhibiting point $x_1=0.69459459$ is a proof that the model
  is \emph{not} globally robust.
\end{example}

\paragraph{No global robustness implies no local robustness.}
One observation that stems from the proof of~\cref{prop:noglobrob}
is that deciding local robustness is also problematic.
Concretely, if we are allowed to select a suitable point in feature
space, then it is the case that,

\begin{proposition} \label{prop:nolocrob}
  For any non-trivial classifier defined on real-valued features,
  there exists a point for which the classifier is not locally
  robust, independently of the value of $\epsilon$ chosen.
\end{proposition}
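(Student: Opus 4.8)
The plan is to recognize that \cref{prop:nolocrob} is essentially already established inside the proof of \cref{prop:noglobrob}: that argument does not merely refute global robustness, it explicitly produces a point $\mbf{z}$ at which local robustness fails for \emph{every} $\epsilon>0$. So I would reuse that construction, repackaged as a standalone statement about one witnessing point.

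First I would use non-triviality of $\fml{M}$ to fix $\mbf{v}_a,\mbf{v}_b\in\mbb{F}$ with $\kappa(\mbf{v}_a)=a\ne b=\kappa(\mbf{v}_b)$, and consider the segment $\mbf{z}(\delta)=\mbf{v}_a+\delta\,(\mbf{v}_b-\mbf{v}_a)$ for $\delta\in[0,1]$. Since $\kappa$ restricted to this segment is non-constant, there must be a ``transition point'': set $\delta^\star=\inf\{\delta\in[0,1]:\kappa(\mbf{z}(\delta))\ne a\}$, which is well defined and lies in $[0,1]$ because that set is nonempty (it contains $\delta=1$), and take $\mbf{z}=\mbf{z}(\delta^\star)$. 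The key step is then to show that every $l_p$-ball around $\mbf{z}$, however small, contains a point with class $a$ and a point with class $\ne a$; a short case split on whether $\kappa(\mbf{z})=a$ or not, using only the definition of the infimum (and the fact that $\delta=0$ and $\delta=1$ witness the two sides), yields this. Since $\kappa(\mbf{z})$ is a single fixed class, one of those two classes differs from $\kappa(\mbf{z})$ and occurs arbitrarily close to $\mbf{z}$; and because $||\mbf{z}(\delta)-\mbf{z}(\delta')||_p=|\delta-\delta'|\cdot||\mbf{v}_b-\mbf{v}_a||_p$ for $p\ge1$, for any $\epsilon>0$ such a differing point lies within distance $\epsilon$ of $\mbf{z}$. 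Hence the implication in~\eqref{eq:locrob} fails at $\mbf{z}$ for every $\epsilon$, which is exactly the claim.

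The only delicate point — and it is mild — is that $\kappa$ is an arbitrary function with no continuity assumption, so one cannot invoke the intermediate value theorem to get a ``jump''; the infimum argument above is precisely what produces a genuine transition point regardless of the structure of $\kappa$. Everything else is routine bookkeeping, and in fact the proof of \cref{prop:noglobrob} already carries the analogous construction out, so I would likely keep this proof very short and simply point back to that construction, adding only the observation that the point $\mbf{z}$ there serves as the required witness independently of $\epsilon$.
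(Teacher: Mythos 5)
Your proposal is correct and follows essentially the same route as the paper: the paper proves this proposition by pointing back to the proof of \cref{prop:noglobrob}, which likewise takes two points $\mbf{v}_a,\mbf{v}_b$ of different classes (guaranteed by non-triviality) and exhibits a transition point $\mbf{z}$ on the segment between them at which local robustness fails for every $\epsilon>0$. Your infimum construction of the transition point is simply a more explicit, rigorous rendering of the paper's informal claim that such a $\mbf{z}$ ``must exist'' (valid without any continuity assumption on $\kappa$), so the two arguments coincide in substance.
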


\begin{proof}
  The same arguments used for the proof~\cref{prop:noglobrob} apply.
\end{proof}

\begin{example} \label{ex:runex:nolocrob}
  With respect to~\cref{ex:runex01}, if we sample training data, then
  the model will be declared locally robust for $\epsilon<0.00540541$,
  due to point $x_1=0.7$. Clearly, if we allow complete freedom on
  which values of $x_1$ to sample, we will then conclude that, for
  $x_1=0.69459459$, robustness is non-existing for any value of
  $\epsilon>0$, and so the classifier is not robust.
\end{example}

Observe that what~\cref{prop:nolocrob} claims is that one can always
find points in feature space for which local robustness does not
hold. Thus, claiming local robustness based on successful proving
local robustness for some selected points in feature space, does not
equate with global robustness holding in all points in feature space.

\paragraph{The robustness of non-trivial classifiers cannot be certified.}
In recent years, a number of works have studied robustness
certification~\cite{kolter-icml19a,kolter-iclr20a,kolter-icml20a,kolter-nips21,kolter-iclr23a}.
Using the definition of robustness certification proposed
in~\cref{sec:prelim} (see~\cref{def:certrob}),
which is taken verbatim from~\cite{kolter-icml19a},
then we can claim that,

\begin{proposition} \label{prop:norobcert}
  For any non-trivial classifier defined on real-valued features,
  robustness cannot be certified, independently of the value of
  $\epsilon$ chosen.
\end{proposition}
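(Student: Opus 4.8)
The plan is to reduce the claim directly to \cref{prop:nolocrob}. By \cref{def:certrob}, a classifier is certifiably robust only if \emph{for any} input $\mbf{x}$ one can produce a guarantee that $\kappa$ is constant on some set around $\mbf{x}$ --- concretely, on an $l_2$ or $l_\infty$ ball, i.e.\ a set with non-empty interior and hence of strictly positive radius. So it suffices to exhibit a single point at which no such guarantee can possibly hold, whatever $\epsilon$ is used.

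First I would reuse the construction from the proof of \cref{prop:noglobrob}. Since the classifier is non-trivial, pick $\mbf{v}_a,\mbf{v}_b\in\mbb{F}$ with $\kappa(\mbf{v}_a)=a\neq b=\kappa(\mbf{v}_b)$, and let $\mbf{z}$ be the point on the segment $[\mbf{v}_a,\mbf{v}_b]$ such that every $l_p$ ball centred at $\mbf{z}$ contains a point classified $a$ and a point classified $b$ (such a $\mbf{z}$ exists, otherwise $\kappa$ would be constant on $[\mbf{v}_a,\mbf{v}_b]$, contradicting $a\neq b$). This is precisely the witness behind \cref{prop:nolocrob}: for every $\epsilon>0$ the classifier fails to be locally robust at $\mbf{z}$.

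The conclusion is then immediate: at input $\mbf{z}$, and for every radius $\epsilon>0$, the prediction is not constant on the $l_p$ ball of radius $\epsilon$ centred at $\mbf{z}$; therefore it is impossible to produce the guarantee required by \cref{def:certrob} at $\mbf{z}$. Since certification, per that definition, must succeed at \emph{every} input, no non-trivial classifier on real-valued features can be certifiably robust, and this holds independently of the chosen $\epsilon$.

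The only real obstacle is interpretational rather than technical: one must close off the degenerate reading in which ``some set around $\mbf{x}$'' is allowed to collapse to the singleton $\{\mbf{x}\}$, under which \emph{every} classifier would be vacuously ``certified''. I would handle this by pointing out that \cref{def:certrob} explicitly refers to an $l_2$ or $l_\infty$ \emph{ball}, which by definition has positive radius; so a legitimate certificate must cover a ball of strictly positive radius around the input, and at $\mbf{z}$ no such ball is monochromatic. Everything else in the argument is inherited verbatim from the proofs of \cref{prop:noglobrob,prop:nolocrob}.
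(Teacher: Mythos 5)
Your proposal is correct and follows essentially the same route as the paper's own proof: it invokes the point constructed in the proofs of \cref{prop:noglobrob,prop:nolocrob} at which local robustness fails for every $\epsilon>0$, and concludes that the universal guarantee required by \cref{def:certrob} cannot be produced. Your extra remark ruling out the degenerate ``singleton set'' reading of the definition is a sensible clarification, but does not change the substance of the argument.
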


\begin{proof}
  Under the stated assumptions, and from the proofs
  of~\cref{prop:nolocrob,prop:noglobrob}, for any classifier we can
  construct a point for which a classifier is not locally robust.
  Hence, robustness cannot be certified.
\end{proof}

\begin{example}
  The analysis of~\cref{ex:runex:nolocrob} also demonstrates that
  robustness certification will fail for the example classifier, for
  any $\epsilon>0$, as long as the point $x_1=0.69459459$ is analyzed.
  In contrast, if the training data were to be sampled for local
  robustness, then one would declare robustness to be certified for
  $\epsilon<0.00540541$. Evidently, such conclusion would be in
  error.
\end{example}

\paragraph{Counterexamples to local robustness.}
Under the assumption of unconstrained inputs, we proved earlier in
this section that no ML model is (locally) robust.
As a result, and given some $\epsilon>0$, counterexamples to
(unconstrained) local robustness are guaranteed to exist, and can be
obtained from any pair of points $\mbf{v},\mbf{x}$ in feature space
such that,
\begin{equation} \label{eq:nolr}
  \exists(\mbf{x},\mbf{v}\in\mbb{F}).%
  \left(||\mbf{x}-\mbf{v}||_{p}\le\epsilon\right)\land\left(\kappa(\mbf{x})\not=\kappa(\mbf{v})\right)
\end{equation}
Observe that requiring $(\mbf{x}\not=\mbf{v})$ is unnecessary.
Also, we note that~\eqref{eq:nolr} is just the negation
of~\eqref{eq:globrob}, i.e.\ the condition for global robustness.
In addition, it is easy to accommodate the cases where a classifier can
flag points as not being robust~\cite{LeinoWF21}. (Earlier work
proposes that such points be flagged with a different class $\perp$,
but in this paper we use $\oslash$ instead.)
Finally, it is also plain to formulate the search for counterexamples
as a decision problem. Given some logic formula $\varphi$ and a target
logic theory $\fml{T}$, then $\llbracket\varphi\rrbracket_{\fml{T}}$
represents the encoding of $\varphi$ in the target logic theory
$\fml{T}$. Also, let $\SAT_{\fml{T}}$ represent a reasoner for theory
$\fml{T}$. Then, we use~\eqref{eq:nolr} to formulate the decision
problem in theory~$\fml{T}$ as follows:
\begin{equation} \label{eq:nolr:dec}
  \SAT_{\fml{T}}\left(\llbracket\left(||\mbf{x}-\mbf{v}||_{p}\le\epsilon\right)\land\left(\kappa(\mbf{x})\not=\kappa(\mbf{v})\right)\rrbracket_{\fml{T}}\right)
\end{equation}
\cref{sec:res} presents results for Binarized Neural Networks (BNNs)
when $\fml{T}$ denotes propositional logic (and so $\SAT_{\fml{T}}$
corresponds to a call to a boolean satisfiability (SAT) reasoner.

\subsection{Basic Negative Results -- Discrete Domains}
\label{ssec:cons:dd:neg}

\jnoteF{ToDo:\\
  Categorical features \& Hamming distance: required to have $\epsilon\ge1$ \\
  Integer-valued features: required to have $\epsilon\ge1$ \\
  Discretized features: required to have $\epsilon\ge\eta$, where
  $\eta$ is the discretization step. \\
}

The negative results in~\cref{ssec:cons:rd:neg} also hold in the case
of classifiers with discrete features. However, the values of
$\epsilon$ considered are further constrained.
First, we consider a classifier with categorical features, an $l_0$
norm and unconstrained inputs. In this case, it is guaranteed that one
cannot have adversarial examples if $\epsilon<1$, i.e.\ if one
prevents any feature from changing value. If we impose the constraint 
$\epsilon\ge1$, then the results of~\cref{ssec:cons:rd:neg} hold,
i) no non-trivial classifier is globally robust; ii) for any
non-trivial classifier there are points in feature space that are not
locally robust; and iii) robustness cannot be certified.

There are also settings where features are discrete and result from 
discretizing real-valued features. This is the case for example when
using binarized neural networks (BNNs)~\cite{bengio-nips16} (see also
the experiments in~\cref{sec:res}).
In these cases we assume a discretization step $\delta$.
As a result, the comments made for classifiers with categorical
features also apply in this case, with the constraint that
$\epsilon\ge\delta$, i.e.\ the smallest distance to consider is no
less than the discretization step.

For example, for the experiments of \cref{sec:res}, when the $l_0$
norm is used, it is the case that $\delta=1$.
As a result, in the experiments we used $\epsilon=1$, thus targeting
the \emph{smallest} distance that could possibly be considered. As the
results show, and as it should be expected, one can find
counterexamples to local/global robustness for all the experiments.

\subsection{Practical Consequences}

The results in~\cref{ssec:cons:rd:neg} have important practical
consequences. First, the experimental setup most often used in the
assessment of local robustness exhibits critical shortcomings.
In a significant body of earlier work, local robustness is assessed by
randomly sampling feature space. This is the case with evaluations of
local robustness in~\cite{johnson-sttt23}.
%
%
This means that, either sampling does not pick the right points in
feature space, and so one is allowed to (incorrectly) decide for local
robustness, or sampling picks one of the points that must exist, and
so (the expected) non local robustness is decided.
One paradigmatic example is VNN-COMP\cite{johnson-sttt23}, where
uniform random sampling has been employed in all the previous
competitions. The bottom line is that any classifier declared local
robust is guaranteed \emph{not} to be so.

The same remarks apply in the case of global robustness, but in this
case the number of existing works is a fraction of the number of works
on deciding local robustness.

\paragraph{Examples of inadequate robustness assessment.}
One example of the limitations of randomly sampling for assessing
robustness is VNN-COMP~\cite{johnson-sttt23}. The description of the
most recent competitions~\cite{johnson-corr22a,johnson-corr23a}
confirms that robustness is assessed by randomly sampling existing
datasets.
In these cases, any results indicating that NNs to being robust are
necessarily incorrect.
Furthermore, past works claiming robustness certification are
inaccurate. Methods based on automated reasoners, which assume
unconstrained inputs, will be in error if ML models are declared
robust. Distribution-restricted methods can only provide probabilistic
guarantees. In such cases, one must trust that the inferred input
distribution faithfully captures possible inputs to the ML model.
More importantly, one must also trust that the sampling methods used
will offer enough rigor.

Besides the shortcomings of local robustness and certification, the
limitations of existing methods for attaining global robustness are
also clear.

\paragraph{Practical assessment of shortcomings.}
As noted earlier~\eqref{eq:nolr:dec}, in practice it is
conceptually simple to demonstrate the limitations of local robustness
using a dedicated automated reasoner.
It suffices to decide the existence of a point $\mbf{z}$ in feature
space which, for arbitrarily small $\epsilon$, it is the case that
there exists a point in the $\epsilon$ ball corresponding to a
prediction other than $\kappa(\mbf{z})$.
\cref{sec:res} summarizes results illustrating not only the existence
of such points for complex classifiers, but also the practical
scalability of this approach.

\paragraph{Existing solutions.}
%
Some approaches avoid the problems reported in this section by curbing
the claims about (certified) robustness. For
example, the definition of global robustness in some
works~\cite{Kwiatkowska-ijcai19} considers a finite set of points
where local robustness is assessed. As long as the inputs respect such
a finite set of points, then the non-existence of adversarial examples
for robust ML models is guaranteed. Unfortunately, if the inputs were
to be known, then the need for ML models would be non-existing.
In a similar vein, other tools~\cite{barret-atva18} ensure robustness
in specific regions of feature space. Although more general
that~\cite{Kwiatkowska-ijcai19}, similar limitations apply.

Some works exploit sampling of the inputs according to some inferred
input
distribution~\cite{kolter-icml19a,kolter-icml20a,kolter-iclr20a,kolter-nips21,kolter-iclr23a}.
Such works offer no formal guarantees of rigor, and so
claims of \emph{certification} hold probabilistically assuming that all
possible inputs respect the assumed input distribution.
%

%
%

\subsection{Threats to Validity \& Discussion}

One possible criticism to the results in this section is that some
works on robustness 
do not assume unconstrained inputs, but instead sample the inputs
according to the distribution inferred from training data or specific
imposed distribution~\cite{proven-icml19,kolter-icml19a}.
%
%
In such a situation, one can argue that both local and  global
robustness can still be safely decided.
Clearly, the claims of rigor differ in the two cases. As the paper
shows, approaches based on automated reasoners that consider all
possible inputs in feature space cannot provide guarantees of
robustness. However, the same applies to any other approach when one
seeks absolute guarantees of robustness.
In addition, there is recent work showing the importance of sampling
out of the
distribution~\cite{YinLSCG19,Hendrycks-acl20,Hendrycks-iccv21,Finn-nips22},
but this again exhibits the shortcomings of local/global robustness
identified earlier in this section.

Furthermore, and regarding the sampling of inputs according to some
inferred input distribution is not without limitations. First,
inferring an input distribution from a negligible fraction of feature
space can be a source of error. Second, an input distribution declares 
inputs more or less likely, but does not prevent the possibility, no
matter how negligible, of inputs not in the distribution. Third,
robustness tools based on automated reasoners assume that \emph{all}
inputs are possible, i.e.\ unconstrained inputs are assumed, when
deciding local robustness for a concrete point in feature
space~\cite{barrett-cav17}.
If input distributions
were to be taken into account for such tools when deciding the
points in feature space to analyze,
then even for deciding local
robustness for a single point in feature space, input distributions
would have to be accounted for.

To the best of our knowledge, there is no clear way on how to
instrument automated reasoners to account for input distributions.
The key point is that sampling according to the input distribution
could be a source of error, and so the integration with tools based on
automated reasoners is unclear. 


A different approach is to explicitly assume that some inputs are not
accepted. For example, allowing an elderly person to be of a fairly
young age.
To the best of our knowledge, part work on robustness does not account
for disallowed inputs. In contrast, the topic has been researched in
formal explainability~\cite{rubin-aaai22,ignatiev-aaai23},
by considering constraints on the inputs.
Future work may re-analyze robustness in light of such constraints.

\section{Some Pros of Robustness}
\label{sec:pros}

Under a scenario in which all inputs are possible, \cref{sec:cons}
raises important concerns about the usefulness of attempting to prove
robustness, be it local or global.
This might seem to suggest the ongoing efforts towards devising
efficient robustness tools are pointless.
This section argues otherwise, and discusses emerging significant uses
of robustness tools.

\paragraph{From adversarial examples to \beaxps/\becxps.}
Recent work revealed a tight relationship between formal explanations
and the non-existence of (constrained) adversarial
examples~\cite{hms-corr23b}, enabled by the concept of
distance-restricted AXps/CXps (see definition
in~\cpageref{par:drxps}). Furthermore, MHS duality between
distance-restricted AXps and CXps enable the navigation of the space
of AXps and CXps.
More importantly, the computation of distance-restricted AXps/CXps can
be instrumented using tools for finding adversarial examples, as long
as those tools respect a few simple conditions. 
As a result, different algorithms have been
proposed for computing explanations~\cite{barrett-corr22,hms-corr23b},
which explore off-the-shelf tools for deciding the existence of
adversarial examples.

Throughout this section, we assume that the existence of adversarial
examples is decided by calls to a suitable oracle. In the algorithms
described in this section, 
this oracle is represented by a predicate
$\robt(\epsilon,\fml{Q};\fml{E},p)$, parameterized by the explanation
problem $\fml{E}=(\fml{M},(\mbf{v},x))$, where $\fml{M}$ is the
classifier, and by the norm $\l_p$, and with arguments $\epsilon>0$
and $\fml{Q}\subseteq\fml{F}$.
Furthermore, the predicate $\robt(\epsilon,\fml{Q};\fml{E},p)$ holds
true if the ML classifier $\fml{M}$ exhibits an adversarial example
within distance $\epsilon$, with the features in $\fml{Q}$ fixed to
the values dictated by $\mbf{v}$.
As a result, the conditions that tools for finding adversarial
examples must respect when finding adversarial examples is to allow
for some features to be fixed to  values determined by the instance
$(\mbf{v},c)$.

\begin{algorithm}[t]
  \begin{flushleft}
  \hspace*{\algorithmicindent}
  \textbf{Input}: {
    Arguments: 
    $\fml{R}$, $\epsilon$;
    Parameters: 
    $\fml{E}$,
    $p$}\\
  \hspace*{\algorithmicindent}
  \textbf{Output}: {One AXp $\fml{S}$}
\end{flushleft}

\begin{algorithmic}[1]
  \Function{$\findaxpdel$}{$\fml{R},\epsilon;\fml{E},p$}
  \Comment{Inv:~$\neg\robt(\epsilon,\fml{R})$}
  \State{$\fml{S}\gets\fml{R}$}
  \Comment{Initially, no feature in reference set is free} 
  \For{$i\in\fml{R}$}
  \State{$\fml{S}\gets\fml{S}\setminus\{i\}$}
  \Comment{Free feature $i$}
  \State{$\outc=\robt(\epsilon,\fml{S};\fml{E},p)$}
  \If{$\outc$}
  \Comment{If invariant not preserved}
  \State{$\fml{S}\gets\fml{S}\cup\{i\}$}
  \Comment{Then, fix again feature $i$}
  \EndIf
  \EndFor
  \State{\tbf{return} $\fml{S}$}
  \Comment{$\fml{S}$ suffices for picking the prediction}
  \EndFunction
\end{algorithmic}

  \caption{Basic algorithm to find AXp using AE search}
  \label{alg:axp:del}
\end{algorithm}

\begin{algorithm}[t]
  \begin{flushleft}
  \hspace*{\algorithmicindent}
  \textbf{Input}: {
    Arguments: 
    $\fml{R}$, $\epsilon$;
    Parameters: 
    $\fml{E}$,
    $p$}\\
  \hspace*{\algorithmicindent}
  \textbf{Output}: {One CXp $\fml{S}$}
\end{flushleft}

\begin{algorithmic}[1]
  \Function{$\findcxpdel$}{$\fml{R},\epsilon;\fml{E},p$}
  \Comment{Inv:~$\robt(\epsilon,\fml{R})$}
  \State{$\fml{S}\gets\fml{R}$}
  \Comment{Initially, no feature in reference set is fixed}
  \For{$i\in\fml{R}$}
  \State{$\fml{S}\gets\fml{S}\setminus\{i\}$}
  \Comment{Fix feature $i$}
  \State{$\outc=\robt(\epsilon,\fml{F}\setminus\fml{S};\fml{E},p)$}
  \If{\tbf{not}~$\outc$}
  \Comment{If invariant not preserved}
  \State{$\fml{S}\gets\fml{S}\cup\{i\}$}
  \Comment{Then, free again feature $i$}
  \EndIf
  \EndFor
  \State{\tbf{return} $\fml{S}$}
  \Comment{$\fml{S}$ suffices for changing the prediction}
  \EndFunction
\end{algorithmic}

  \caption{Basic algorithm to find CXp using AE search}
  \label{alg:cxp:del}
\end{algorithm}

To illustrate the relationships between explanations and adversarial
examples, \cref{alg:axp:del,alg:cxp:del} summarize two simple
algorithms for computing \eaxps/ \ecxps. One argument is the value of
$\epsilon$, whereas the other argument is a set of features that must
be kept fixed (for AXps) or set free (for CXps). This argument is
important for devising algorithms for the enumeration of explanations;
for computing a single explanation, we set $\fml{R}=\fml{F}$.
For both~\cref{alg:axp:del,alg:cxp:del} the loop invariant is that
$\fml{S}$ is, respectively, a weak \eaxp/\ecxp.
In the case of an \eaxp, the features in $\fml{S}$, if fixed, cause
that no adversarial example can be identified (given the distance
$\epsilon$).
In the case of an \ecxp, the features in $\fml{S}$, if fixed, cause
that no adversarial example can be identified (given the distance
$\epsilon$).
Features are freed/fixed as long as the loop invariant is preserved.
An oracle for deciding the existence of an adversarial example is used
to decide whether or not the loop invariant is preserved when another
feature is freed/fixed.
It can be shown that the final set $\fml{S}$ is a \eaxp/\ecxp.
The proposed algorithms are among the simplest that can be
devised%
\footnote{%
The problem of finding one \eaxp/\ecxp represents an example of
computing one minimal set subject to a monotone predicate
(MSMP)~\cite{msjm-aij17}, and so existing algorithms for MSMP can be
used.}.
Nevertheless, other more sophisticated algorithms for deciding the
existence of adversarial examples can be used, which require in
practice fewer calls to the oracle than the algorithms outlined in the
paper.

\paragraph{Navigating the space of \beaxps/\becxps.}
%
Besides computing one distance-restricted explanation, one may be
interested in navigating the sets of distance-restricted
explanations. For example, we may be interested in deciding whether a
sensitive feature can occur in some explanation, or we may just be
interested in finding some other explanation when the reported one is
uninteresting.
\begin{algorithm}[t]
  \begin{flushleft}
  \hspace*{\algorithmicindent}
  \textbf{Input}: {
    Argument $\epsilon$,
    Parameters $\fml{E}$, $p$
  } 
  \hspace*{\algorithmicindent}
  %
\end{flushleft}

\begin{algorithmic}[1]
  \State{%
    \label{alg:exp:ln01}$\fml{H}\gets\emptyset$}%
  \Comment{$\fml{H}$ defined on set $U=\{u_1,\ldots,u_m\}$}
  \Repeat\label{alg:exp:ln02} 
  \State{%
    \label{alg:exp:ln03}$(\outc,\mbf{u})\gets\SAT(\fml{H})$}
  \If{\label{alg:exp:ln04}$\outc=\TRUE$}
  \State{\label{alg:exp:ln05}$\fml{S}\gets\{i\in\fml{F}\,|\,u_i=0\}$}%
  \Comment{$\fml{S}$: \emph{fixed} features}
  \State{\label{alg:exp:ln06}$\fml{U}\gets\{i\in\fml{F}\,|\,u_i=1\}$}%
  \Comment{$\fml{U}$: \emph{universal} features} 
  \If{\label{alg:exp:ln07}$\wcxp(\fml{U},\epsilon;\fml{E})$}
  \Comment{$\fml{U}=\fml{F}\setminus\fml{S}\supseteq$ some CXp}
  \State{\label{alg:exp:ln08}$\fml{P}\gets\findcxpdel(\fml{U},\epsilon;\fml{E})$}
  \State{\label{alg:exp:ln09}$\prtcxp(\fml{P})$}
  \State{\label{alg:exp:ln10}$\fml{H}\gets\fml{H}\cup\{(\lor_{i\in\fml{P}}\neg{u_i})\}$}
  \Else\label{alg:exp:ln11}
  \Comment{$\fml{S}\supseteq$ some AXp}
  \State{\label{alg:exp:ln12}$\fml{P}\gets\findaxpdel(\fml{S},\epsilon;\fml{E},p)$}
  \State{\label{alg:exp:ln13}$\prtaxp(\fml{P})$}
  \State{\label{alg:exp:ln14}$\fml{H}\gets\fml{H}\cup\{(\lor_{i\in\fml{P}}{u_i})\}$}
  \EndIf
  \EndIf
  \Until{\label{alg:exp:ln15}$\outc=\FALSE$}
\end{algorithmic}

  \caption{MARCO-like~\cite{lpmms-cj16} enumeration of AXps/CXps}
  \label{alg:xp:all}
\end{algorithm}
\cref{alg:xp:all} details a MARCO-like~\cite{lpmms-cj16} approach for
enumerating both all AXps and CXps. This algorithm illustrates the
integration of Boolean Satisfiability (SAT) algorithms with tools for
finding adversarial examples, with the purpose of computing distance
restricted AXps and CXps.

\paragraph{New insights into explanations.}
%
For non-trivial classifiers defined on real-values features, the fact
that local robustness does not hold on all points of feature space
reveals not only the guaranteed existence of adversarial examples, but
it also reveals new properties about explanations. We discuss one such
example.

\begin{proposition}
  For a non-trivial classifier defined on real-valued features, and
  for a measure $l_p$, there exist instances for which, for any
  $\epsilon>0$, there exist \eaxps and \ecxps.
\end{proposition}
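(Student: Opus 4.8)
The plan is to reuse the non‑robust point constructed in the proof of \cref{prop:nolocrob}. That proof produces, for any non‑trivial classifier on real‑valued features (with $l_p$, $p\ge1$), a point $\mbf{z}$ lying on the segment between two differently‑classified points $\mbf{v}_a,\mbf{v}_b\in\mbb{F}$ such that, for \emph{every} $\epsilon>0$, the $l_p$ ball of radius $\epsilon$ around $\mbf{z}$ contains a point whose class differs from $\kappa(\mbf{z})$. Set $c:=\kappa(\mbf{z})$ and take the instance $(\mbf{z},c)$; I claim this instance witnesses the proposition, i.e.\ it yields genuine (non‑trivial) \eaxps and \ecxps for all $\epsilon>0$.

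Fix an arbitrary $\epsilon>0$ and consider $\fml{E}=(\fml{M},(\mbf{z},c))$. First I would note that $\fml{F}$ itself is always both a weak \eaxp and a weak \ecxp at this instance: fixing \emph{all} features to $\mbf{z}$ forces $\mbf{x}=\mbf{z}$, so $\waxpe(\fml{F},\epsilon;\fml{E})$ holds trivially, while freeing \emph{all} features turns $\wcxpe(\fml{F},\epsilon;\fml{E})$ into the statement ``there is an adversarial example within distance $\epsilon$'', which holds by the choice of $\mbf{z}$. Since $\fml{F}$ is finite and both $\waxpe(\,\cdot\,,\epsilon)$ and $\wcxpe(\,\cdot\,,\epsilon)$ are monotone (the monotonicity of $\waxp$/$\wcxp$ recalled earlier carries over to the distance‑restricted predicates), the families of weak \eaxps and of weak \ecxps are non‑empty upward‑closed subsets of $2^{\fml{F}}$, hence possess subset‑minimal elements; by definition these are a \eaxp $\fml{X}$ and a \ecxp $\fml{Y}$. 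It remains to see they are non‑empty, otherwise the claim is vacuous. But $\emptyset$ is a \eaxp iff $\waxpe(\emptyset,\epsilon;\fml{E})$ holds, which is exactly local robustness of $\fml{M}$ at $\mbf{z}$ for $\epsilon$, and $\mbf{z}$ was chosen so that this \emph{fails} for every $\epsilon>0$; likewise $\emptyset$ cannot be a weak \ecxp, as $\wcxpe(\emptyset,\epsilon;\fml{E})$ would require $\mbf{x}=\mbf{z}$ with $\kappa(\mbf{x})\ne c$. Hence $\fml{X}\ne\emptyset$ and $\fml{Y}\ne\emptyset$, and since $\epsilon>0$ was arbitrary we are done.

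The only delicate point is the very first step: one must be sure that the construction behind \cref{prop:nolocrob} yields a \emph{single} point $\mbf{z}$ that is non‑robust \emph{simultaneously for all} $\epsilon>0$, rather than a different counterexample point for each $\epsilon$. This is already the content of \cref{prop:nolocrob} as stated (``independently of the value of $\epsilon$''); should a self‑contained argument be preferred, one sets $t^\star:=\inf\{t\in[0,1]:\kappa(\mbf{v}_a+t(\mbf{v}_b-\mbf{v}_a))\ne\kappa(\mbf{v}_a)\}$ and $\mbf{z}:=\mbf{v}_a+t^\star(\mbf{v}_b-\mbf{v}_a)$, and checks that every open ball around $\mbf{z}$ meets both $\{\kappa=\kappa(\mbf{v}_a)\}$ and its complement, which is precisely the property needed above for all $\epsilon>0$ at once.
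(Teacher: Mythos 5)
Your proof is correct, and it starts exactly where the paper's proof does: take the point (your $\mbf{z}$, the paper's $\mbf{t}$) that \cref{prop:nolocrob} shows is non-robust for \emph{every} $\epsilon>0$, and use the instance $(\mbf{z},\kappa(\mbf{z}))$. Where you diverge is in how the explanations are extracted. The paper's sketch builds a weak \ecxp directly from each adversarial example (the set of features whose values change), then enumerates the \ecxps and invokes MHS duality between \eaxps and \ecxps to conclude that \eaxps exist as well. You instead observe that $\fml{F}$ is always a weak \eaxp and, by the choice of $\mbf{z}$, also a weak \ecxp, and then use monotonicity of $\waxpe(\cdot,\epsilon)$ and $\wcxpe(\cdot,\epsilon)$ together with finiteness of $\fml{F}$ to extract subset-minimal elements, checking separately that $\emptyset$ cannot be a weak \eaxp (that would be local robustness at $\mbf{z}$) or a weak \ecxp. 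This is more elementary: it avoids enumeration and the duality theorem altogether, and it makes explicit the non-triviality ($\fml{X}\neq\emptyset$, $\fml{Y}\neq\emptyset$) that the paper leaves implicit. Your closing remark is also a genuine strengthening rather than pedantry: the paper's earlier argument only asserts that a suitable $\mbf{z}$ exists, whereas your $t^\star=\inf\{t:\kappa(\mbf{v}_a+t(\mbf{v}_b-\mbf{v}_a))\neq\kappa(\mbf{v}_a)\}$ construction exhibits one point whose every $l_p$ ball meets both class regions, which is precisely the ``single point, all $\epsilon$ simultaneously'' property both proofs rely on.
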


\begin{Proof}(Sketch)
  We pick a point $\mbf{t}$ in feature space for which for any
  $\epsilon>0$, there exists an adversarial example. It is immediate
  that, for each   adversarial example, we can construct a CXp, in
  that one simply adds to the CXp the features that change value in
  the adversarial example. The same applies for \ecxps.
  The next step is to enumerate all such \ecxps. From these, by
  hitting set duality, we enumerate the \eaxps.
  This proves the result
\end{Proof}

\section{Experimental Evidence} \label{sec:res}
This section presents a summary of practical evidence 
of our results on global robustness (and so indirectly on the
impossibility of global local robustness) for the case study of
Binarized Neural Networks (BNNs) classifiers trained with image
datasets. 
The decision to opt for BNNs results exclusively from the better
scalability of its dedicated reasoners when compared with the ML
classifiers, e.g. feedforward Neural Networks (NNs).

\paragraph{Experimental setup.}
The experiments are conducted on a MacBook Pro with a Dual-Core Intel
Core~i5 2.3GHz CPU with 8GByte RAM running macOS Ventura.
The time limit is set 600 s, whilst no imposition for the memory limit.

\paragraph{Benchmarks.}
The assessment is performed on  image MNIST digits~\cite{LeCun98} and 
CIFAR10~\cite{madry19-cifar10} datasets.
We considered the feedforward BNN models used in~\cite{rinard-nips20} 
that are trained with MNIST and CIFAR10 datasets with varying different 
training parameters. Hence, in total we collected 13 BNNs, i.e. 8 MNIST 
and 5 CIFAR10 BNNs.

\paragraph{Prototypes implementation.}
We implemented a formal global robustness verifier for BNNs on top of
the Python prototype of~\cite{rinard-nips20} designed for local robustness
checking. 
Concretely, we encode two copies of the BNN, one 
 replica that represents  $\kappa(\mbf{x})$ and 
another one for $\kappa(\mbf{y})$ and then enforce them 
to pick different classes, i.e.  $\kappa(\mbf{x}) \neq \kappa(\mbf{y})$.
Moreover, we used PySAT~\cite{imms-sat18} to generate BNN SAT-based encoding, 
also used the Python package PyPBLib~\cite{pypblib} of 
PBLib~\cite{pblib.sat2015},  integrated in PySAT,  
to generate the pseudo-Boolean encoding.
%
%
Moreover, Roundingsat  oracle\footnote{%
Roundingsat is a pseudo-Boolean solver powered with  
an LP solver SoPlex~\cite{soplex}.}~\cite{Nordstrom-ijcai18}  is 
instrumented to solve the pseudo-Boolean formulation of the robustness 
targeted problem.
We underline that in our observations on preliminary results. as well
as pointed out in~\cite{rinard-nips20}, pseudo-Boolean encoding 
and Roundingsat oracle shown (slightly) better performances than 
SAT solvers. 
As a result, we solely report the experimental results of the pseudo-Boolean 
encodings of the selected BNN benchmarks.

%
%
\sisetup{parse-numbers=false,detect-all,mode=text}
\setlength{\tabcolsep}{5pt}
\let\lpr\undefined
\let\rpr\undefined
\newcommand{\lpr}{(}
\newcommand{\rpr}{)}

\begin{table}[ht]
  \centering
\resizebox{\textwidth}{!}{
  \begin{tabular}{l>{\lpr}S[table-format=3.0,table-space-text-pre=\lpr]S[table-format=3.0,table-space-text-post=\rpr]<{\rpr}
  S[table-format=2]S[table-format=5.0]
  S[table-format=1]S[table-format=1]S[table-format=3.2] 
  S[table-format=1.2]S[table-format=1]S[table-format=2.2] }
\toprule[1.2pt]
\multirow{2}{*}{\bf Dataset} & \multicolumn{2}{c}{\multirow{2}{*}{\bf (m,~K)}}  & \multicolumn{2}{c}{\bf BNNs} & \multicolumn{3}{c}{\bf  $l_0$ distance } & \multicolumn{3}{c}{\bf  $l_\infty$ distance } \\
  \cmidrule[0.8pt](lr{.75em}){4-5}
  \cmidrule[0.8pt](lr{.75em}){6-8}
  \cmidrule[0.8pt](lr{.75em}){9-11}
& \multicolumn{2}{c}{} & {\bf D}  & {\bf \#N}  &  { $\bm\epsilon$ }  &  {\bf AEx} & {\bf Time }  & { $\bm\epsilon$ }  &  {\bf AEx} & {\bf Time }  \\
\toprule[1.2pt]

cifar10-s-adv2 & 3072 & 10 & 3 & 6244 & 1 & \checkmark & 5.67 & 0.07 & \checkmark & 1.04 \\
cifar10-s-adv8 & 3072 & 10 & 3 & 6244 & 1 & \checkmark & 2.40 & 0.07 & \checkmark & 0.99 \\
cifar10-s-adv8-ternweight & 3072 & 10 & 3 & 6244 & 1 & \checkmark & 17.89 & 0.07 & \checkmark & 33.61 \\
cifar10-s-advnone & 3072 & 10 & 3 & 6244 & 1 & \checkmark & 1.56 & 0.07 & \checkmark & 1.38 \\
cifar10-s-advnone-ternweight & 3072 & 10 & 3 & 6244 & 1 & \checkmark & 158.23 & 0.07 & \checkmark & 2.66 \\
mnist-mlp & 784 & 10 & 4 & 1100 & 1 & \checkmark & 0.81 & 0.62 & \checkmark & 0.19 \\
mnist-s-adv0.1 & 784 & 10 & 3 & 4804 & 1 & \checkmark & 3.14 & 0.62 & \checkmark & 0.17 \\
mnist-s-adv0.3 & 784 & 10 & 3 & 4804 & 1 & \checkmark & 0.59 & 0.62 & \checkmark & 0.13 \\
mnist-s-adv0.3-hardtanh & 784 & 10 & 3 & 4804 & 1 & \checkmark & 0.24 & 0.62 & \checkmark & 0.17 \\
mnist-s-adv0.3-ternweight-wd0 & 784 & 10 & 3 & 4804 & 1 & \checkmark & 31.13 & 0.62 & \checkmark & 1.32 \\
mnist-s-adv0.3-ternweight-wd1 & 784 & 10 & 3 & 4804 & 1 & \checkmark & 2.00 & 0.62 & \checkmark & 0.35 \\
mnist-s-advnone & 784 & 10 & 3 & 4804 & 1 & \checkmark & 0.36 & 0.62 & \checkmark & 0.22 \\
mnist-s-advnone-ternweight & 784 & 10 & 3 & 4804 & 1 & \checkmark & 0.70 & 0.62 & \checkmark & 0.81 \\

\bottomrule[1.2pt]
\end{tabular}
}
\caption{%
  Detailed performance evaluation of global robustness verification for Binarized 
  Neural Networks (BNNs).
The table shows results for 13 models trained with CIFAR10 and MNIST image datasets.
(Note that the quantization step $qs$ is fixed to 0.61 for MNIST and 0.064 
for CIFAR10.)
%
} 
\label{tab:BNNs-pb}
\end{table}

\paragraph{Results.}
\autoref{tab:BNNs-pb} summarizes  the results on BNNs containing 
4804 to 6244 neurons, 3 to 4 hidden layers (i.e. tensor shape 
of cifar10 (resp.\ mnist) networks is  (3072, 4096, 2048, 100, 10) 
(resp.\ (784, 500, 300, 200, 100, 10)) ).  
Columns {\bf m} and {\bf K} report, respectively, the
number of features and classes in the dataset.
Columns {\bf D}, {\bf\#N}  show, respectively, number 
of hidden layers, total number of neurons 
of the BNN classifier.
Columns  {\bf p}, {\bf AEx} and  {\bf Time}  report, resp., the fixed hamming or 
chebyshev distance value,  whether or not the model is robust under  
{\bf $\epsilon$} and  the average  runtime  for checking robustness.
(Note that the quantization step $qs$ is fixed to 0.61 for MNIST and 0.064 
for CIFAR10, so $\lfloor \frac{\epsilon}{qs}\rceil = 1$ for MNIST (resp.\  CIFAR10).)
As can be observed from the results, our robustness queries are 
able to identify adversarial examples for all tested BNNs, with both 
$l_0$ and $l_\infty$ distances, and in just a few seconds.
Clearly, our results confirm the theoretical findings presented earlier 
that global robustness query will always report an adversarial example 
for any non-trivial classifier. 
%


\section{Conclusions \& Future Research} \label{sec:conc}

This paper presents simple arguments that demonstrate the shortcomings
of past work on deciding robustness, be it global or local. Similarly,
the paper uncovers critical shortcomings of certified robustness.
In addition, the paper also argues that possible attempts at solving
the identified shortcomings are not entirely satisfactory.
Given the results in this paper, we conclude that ongoing efforts for
delivering (local) robustness of ML models are misguided.

In contrast to the negative results presented in the paper, the paper
details recently proposed uses of robustness tools, building on the
connections between adversarial examples and explainability.
Furthermore, the negative results on robustness are used to shed light
on the properties of distance-restricted explanations of ML models.

Future work will further investigate the links between adversarial
examples and formal explanations. For example, key properties of
distance-restricted explanations, including worst-case size, will be
investigated, with the purpose of proposing distance-restricted
explanations as an alternative to probabilistic
explanations~\cite{kutyniok-jair21,barcelo-nips22,ihincms-ijar23}.

\ifthenelse{\boolean{nonanon}}
{\section*{Acknowledgments}
  This work was supported by the AI Interdisciplinary Institute ANITI, 
  funded by the French program ``Investing for the Future -- PIA3''
  under Grant agreement no.\ ANR-19-PI3A-0004,  
  and
by the National Research Foundation, Prime Minister’s Office,
Singapore under its Campus for Research Excellence and Technological
Enterprise (CREATE) programme.
}{}

\newtoggle{mkbbl}

\settoggle{mkbbl}{false}

\iftoggle{mkbbl}{
    \bibliographystyle{splncs04}
    \bibliography{refs,team,refs2,xtra,xtra2}
}{
  \input{paper.bibl}
}


\end{document}